\newtheorem{theorem}{Theorem}
\newtheorem{lemma}{Lemma}
\def\BibTeX{{\rm B\kern-.05em{\sc i\kern-.025em b}\kern-.08em
    T\kern-.1667em\lower.7ex\hbox{E}\kern-.125emX}}
\begin{document}

\title{A Robust Tube-Based Smooth-MPC for Robot Manipulator Planning}

\author{Yu Luo,
        Mingxuan Jing,
        Tianying Ji,
        Fuchun Sun*,
        Huaping Liu
\thanks{The authors are with the Department of Computer Science
and Technology, Tsinghua University, Beijing National Research Center
for Information Science and Technology (BNRist), Beijing 100084, China, (e-mail: luoyu19@mails.tsinghua.edu.cn, jmx16@mails.tsinghua.edu.cn, jity20@mails.tsinghua.edu.cn, fcsun@tsinghua.edu.cn, hpliu@mail.tsinghua.edu.cn).}
\thanks{Corresponding author is Fuchun Sun.}}

\markboth{Journal of \LaTeX\ Class Files, August~2020}%
{Shell \MakeLowercase{\textit{et al.}}: Bare Demo of IEEEtran.cls for Journals}

\maketitle

\begin{abstract}
Model Predictive Control (MPC) has shown the great performance of target optimization and constraint satisfaction. However, the heavy computation of the Optimal Control Problem (OCP) at each triggering instant brings the serious delay from state sampling to the control signals, which limits the applications of MPC in resource-limited robot manipulator systems over complicated tasks. In this paper, we propose a novel robust tube-based smooth-MPC strategy for nonlinear robot manipulator planning systems with disturbances and constraints. Based on piecewise linearization and state prediction, our control strategy improves the smoothness and optimizes the delay of the control process. By deducing the deviation of the real system states and the nominal system states, we can predict the next real state set at the current instant. And by using this state set as the initial condition, we can solve the next OCP ahead and store the optimal controls based on the nominal system states, which eliminates the delay. Furthermore, we linearize the nonlinear system with a given upper bound of error, reducing the complexity of the OCP and improving the response speed. Based on the theoretical framework of tube MPC, we prove that the control strategy is recursively feasible and closed-loop stable with the constraints and disturbances. Numerical simulations have verified the efficacy of the designed approach compared with the conventional MPC.
\end{abstract}

\section{Introduction}
Model Predictive Control (MPC), known as Receding Horizon Control (RHC), can optimize system criteria and deal with state and control input constraints effectively in industrial control process. MPC has inspired many research efforts and showed potential impacts in the academic area and the process industries compared with the methods of other multi-variable control \cite{MAYNE20142967,2020Industry,2007Model}. In each control period, MPC solves an open-loop and finite-horizon dynamic optimal control problem (OCP) to generate the optimal control input sequence and applies the first element of that sequence to the plant to update the system states. This process repeats until the system states converge around an equilibrium point, which provides an effective way to approximate the optimal control \cite{2000Constrained,SunDXL18,KohlerMA20,mesbah2016stochastic}. In this time-triggered execution fashion, MPC visits and updates the states of the system periodically to realize state feedback control and iterative convergence, which has proved the excellent control performance in the process industry, power system and robot control \cite{2016Model,BrunnerBKPZNS20,HanniganSKHYC20,rubagotti2019semi,vazquez2014model}.

\begin{figure}[t]
	\centering
	\includegraphics[width=1.0\linewidth]{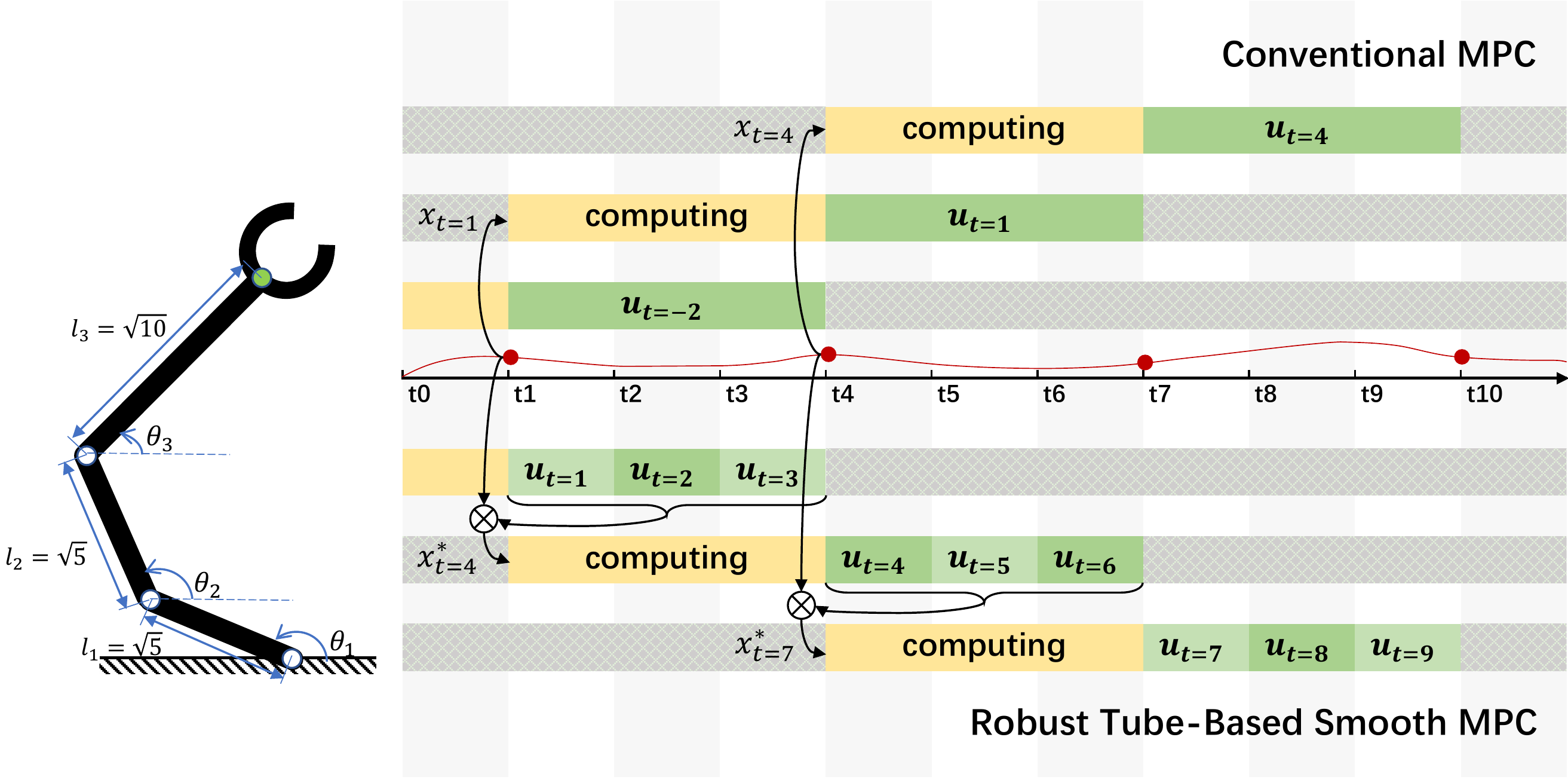}
	\caption{In this figure, we compare the implementation of the conventional MPC with the robust tube-based smooth-MPC. There are three parts to show: the three-link robot manipulator, the conventional MPC and the time-triggered MPC. For the conventional MPC, the computing time causes the multi-step delay of control input. In our control method, we predict the future state to compute the OCP ahead, which can eliminate the effect of the delay.}
	\label{fig:mcmthesis-logo}
\end{figure}

Though MPC has showed the great optimal control performance, solving the OCP periodically brings relatively heavy computation which results difficulties for implementing the controller to the real systems. Moreover, when the delay appears, the system has to hold the control input until the update of the new computational control input. In this process, the manipulator system, with the target position approaching or the reference tracking tasks, may break the constraints even though become unstable. To apply this controller into fast dynamic systems and resource-limited platforms, there are many impressive works to decrease the computation load and improve the response speed, which can be summarized as two aspects: reducing the computational complexity and decreasing the solving frequency of the OCP. In the work of Han and Tedrake \cite{Han9196824}, the method of piecewise linear affine approximations is adopted for dexterous robotic manipulation to accomplish the task with non-smooth nonlinear system and large external disturbances. Except the linearized model, shortening the prediction horizon is a effect way to reduce the computation time of the OCP in \cite{GRIFFITH2018109,lizhao9007513}. To decrease the solving frequency, event-triggered MPC \cite{hashimoto2017event} and self-triggered MPC \cite{brunner2016robust} have been one of well-developed control methods with increasing concern. In the work of Li and Liu \cite{LiS14a,liu8010327}, the framework of event-triggered MPC for continuous-time nonlinear systems is studied. By this framework, the authors in \cite{zhao2019event} develop an event-triggered decentralized tracking control with MPC for modular reconfigurable robots.

However, whatever the reduced complexity or the decreased frequency, the computation time of solving the OCP in single step is still too long to satisfy the short control period in fast systems, which causes the delay between the sampling and the input. Due to the prediction property of MPC, some references have tried to eliminate the delay by asynchronous sampling and input. In the work of \cite{zavala2009advanced}, the authors proposed the advanced-step nonlinear MPC controller to predict the next nominal system states. However, the next nominal system states deviated from the real states due to the disturbances which caused the inaccurate prediction. The authors in \cite{SU20131342} developed a dual time scale control scheme for linear/nonlinear systems with external disturbances. In this scheme, a pre-compensator and an outer MPC controller updates the control at different frequency to suppress uncertainty and ensure stability while the open-loop optimal control performance can not be guaranteed. By the characters of feed-forward action, the current control action can be calculated from the previous sampling interval for LPV model with bounded disturbance in \cite{HU202059}. In the work of \cite{lan20205235}, based on future state prediction, the MPC policy is executed in advance at current instant with the guarantee of recursive feasibility and closed-loop stability.

Motivated by these facts, we propose a novel robust tube-based smooth-MPC for nonlinear systems with constraints and disturbances, which can ensure the smoothness of the control process without potential performance degradation (Fig. 1). Different from previous works, the next real states can be predicted by multi-sampling steps, reducing the computational delay of our MPC. Further, our control strategy combines the piecewise linearization and the tube MPC, which shortens the computation time in a single control period and improves the robustness of the system. The main contributions of this work are three-fold:

(i) By deducing the deviation bound of the states in the real system and the nominal system, we give the predictive disturbed state set for the next real states as the initial condition for the next OCP at the current instant. Thus, we can start solving the next triggered OCP ahead and use the optimal results for next triggering instant to avoid the delay.

(ii) The technology of piecewise linearization in nonlinear systems is adopted to decrease the computational complexity of the OCP. Moreover, the bound of linearization error is estimated to ensure the similarity of the linear system and the nonlinear system.

(iii) Theoretical analysis on the recursive feasibility and closed-loop stability shows the effectiveness of our method.

\emph{Notation}: $\mathbb{N}$ and $\mathbb{R}$ are the natural integers and the real numbers. $\mathbb{R}^n$ means the $n$-dimension vector space. For a matrix $M$, $M\preceq0$ denotes that the real parts of all eigenvalues of $M$ are negative. For a vector $x$, $\|x\| \triangleq \sqrt{x^Tx}$ and $\|x\|_P$ with the positive definite matrix $P$ means $\|x\|_P\triangleq\sqrt{x^TPx}$. If a vector is shown as $\textbf{x}(t)$, it is a sequence $\{x(t),x(t+1),\ldots\}$. $(k+i|k)$ indicates a prediction of a variable $i$ steps ahead from time $k$. We use $\bar{\cdot}$ to mean a feasible variable satisfying all constraints and $\cdot^*$ as an optimal variable obtained by solving OCP. For any set $\mathcal{A},\mathcal{B}\in R^{n\times n}$, $\mathcal{A}\oplus\mathcal{B}$ is the Minkowski set addition, which means $\mathcal{A}\oplus\mathcal{B}\triangleq\{a+b|a\in\mathcal{A},b\in\mathcal{B}\}$ and $\mathcal{A}\ominus\mathcal{B}$ is thePontriagin set subtraction, which means $\mathcal{A}\ominus\mathcal{B}\triangleq\{a|\forall b\in\mathcal{B},a+b\in\mathcal{A}\}$.

\section{Problem Formulation}
\subsection{System Description}
In this paper, we consider a nonlinear three-link manipulator system with constraints and disturbances as follows
\begin{eqnarray}\label{sys}
\dot{z} = f(z,u)+e.
\end{eqnarray}

For this model, the manipulator has three revolute joints and three angular velocity controls in plane motion. Then, for the end operator point planning of the manipulator, the kinematics dynamic can be written as
\begin{eqnarray}\label{realsys}
\begin{bmatrix} \dot{x} \\ \dot{y} \\ \dot{\theta_1} \\ \dot{\theta_2} \\ \dot{\theta_3} \end{bmatrix}=
T(\theta)
\begin{bmatrix} \omega_1 \\ \omega_2 \\ \omega_3 \end{bmatrix} + \begin{bmatrix} e_1 \\ e_2 \\ e_3 \\ e_4\\ e_5 \end{bmatrix},
\end{eqnarray}
where $T(\theta)$ is defined as
\begin{eqnarray*}
T(\theta)=
\begin{bmatrix} -L_1 \sin(\theta_1) & -L_2 \sin(\theta_2) & -L_3 \sin(\theta_3) \\ L_1 \cos(\theta_1) & L_2 \cos(\theta_2) & L_3 \cos(\theta_3) \\
 1 & 0 & 0 \\ 0 & 1 & 0 \\ 0 & 0 & 1
\end{bmatrix}.
\end{eqnarray*}
In this model, $z(t)\triangleq(p(t),\theta(t))$ is the system states and $u(t)\triangleq(\omega_1(t),\omega_2(t),\omega_3(t))$ is the control input. $p(t)=(x(t),y(t))$ is the coordinate of the end point and $\theta(t)=(\theta_1(t),\theta_2(t),\theta_3(t))$ is the joint angle. $L_1,L_2,L_3$ are the length of the three links, respectively. $(\omega_1(t),\omega_2(t),\omega_3(t))$ is the corresponding angular velocity at each joint. For this nonlinear system,  $e(t) = (e_1(t),\ldots,e_5(t)) \in \mathbb{R}^5\cap\{0\}$ means the additional disturbances which is bounded as $\|e(t)\|\leqslant\eta_1$, where $\eta_1>0$ is a known constant. In the real control process, the system within the mechanical limitations is subjected to the following hard constraints on the control inputs and states as
\begin{eqnarray}
&z(t) \in \mathbb{Z} \subseteq \mathbb{R}^5 \triangleq \{\theta_i: \underline{\theta_i}\leqslant\|\theta_i\|\leqslant\overline{\theta_i}\}, \\
& u(t) \in \mathbb{U} \subseteq \mathbb{R}^3 \triangleq \{\omega_i: \|\omega_i\|\leqslant\overline{\omega_i}\},
\end{eqnarray}
where $\underline{\theta_i}$, $\overline{\theta_i}$ and $\overline{\omega_i}$ are three known positive constants. Beside, we introduce the nominal system of (\ref{sys}) as \begin{eqnarray}\label{nonminalsys}\label{nominalsys}
\dot{z} = f(z,u),
\end{eqnarray}
to obtain the robustness by this deterministic system and the original system.

\subsection{Control Objective}
The control objective is to move the end point from initial position to a desired final position within a reference trajectory. In order to guarantee the satisfaction of various constraints and optimal control performance, we first introduce conventional robust MPC to complete this task. Define $T$ as the prediction horizon. At each sampling instant $t_k$, the conventional MPC solves an OCP to obtain an optimal control sequence $\mathbf{u}^*(t|t_k)$, where $t\in[t_k,t_k+T]$. Thus, we consider the cost function over the prediction horizon as
\begin{eqnarray}
J(\bar{z}(t|t_k),\mathbf{\bar{u}}(t|t_k),t_k) \!\!\!\!&=&\!\!\!\! \int^{t_k+T}_{t_k} \!\!\!\!L(\bar{z}(t|t_k),\bar{u}(t|t_k))dt \nonumber \\
&&\quad \quad+ V_f(\bar{z}(t_k+T|t_k)),
\end{eqnarray}
where $L(\bar{x}(t|t_k),\bar{u}(t|t_k)) = \|\bar{z}(t|t_k)\|^2_Q + \|\bar{u}(t|t_k)\|^2_R$ is the stage cost function and $V_f(z(t_k+T|t_k))=\|\bar{z}(t_k+T|t_k)\|^2_P$ is the terminal penalty cost function. In this function, $Q$ and $P$ are positive semi-definite matrices and $R$ is a positive definite matrix. Then, the OCP 1 can be formulated as
\begin{eqnarray}
\mathbf{u}^*(t|t_k) = {\min_{\mathbf{\bar{u}}(t|t_k)\in \mathbb{U}}} J(\bar{z}(t|t_k),\mathbf{\bar{u}}(t|t_k),t_k),
\end{eqnarray}
subject to
\begin{subequations}
\begin{align}
&\bar{z}(t_k|t_k) = z(t_k),     \\
&\dot{\bar{z}}(t|t_k) = f(\bar{z}(t|t_k),\bar{u}(t|t_k)),\\
&\bar{z}(t|t_k) \in \mathbb{Z}\ominus \mathbb{Z}_e(t), \quad \bar{u}(t|t_k)\in \mathbb{U}, \\
&\bar{z}(t_k+T|t_k) \in \mathbb{Z}_\epsilon,\quad t\in[t_k,t_k+T].
\end{align}
\end{subequations}
where $\mathbb{Z}_e(t) = \{z:\|\bar{z}\|\leqslant t\eta(1+l)^t\}$ is the tightened state constraint set in Theorem 1 of Section \uppercase\expandafter{\romannumeral3} to improve the robustness of the system and $\mathbb{Z}_\epsilon = \{z:\|\bar{z}\|_P\leqslant \epsilon, \epsilon>0\}$ is the robust terminal region.\\
\textbf{Key Problem}: Due to the serious delay of single control step for solving the OCP, the controller of robot manipulator system has to keep the control input for the waiting time, which may cause the suboptimality even the instability of the system. Thus, how to ensure the one-to-one correspondence of the real system states and the optimal control input without obvious computational time delay under the framework of MPC is the key improvement of this paper.

\section{Methodology}
In this section, the prediction of the system states is developed and the robust tube-based MPC is designed with theoretical guarantee.
\subsection{Prediction of Real System States}
Firstly, we deliver two lemmas to describe the properties of the robot manipulation system.
\begin{lemma}
The system function $f(z,u)$ is locally Lipschitz continuous with respect to $x$ and $u$. By the control inputs $u_1, u_2\in \mathbb{U}$, for $\forall z_1, z_2 \in \mathbb{X}$ the system satisfies
\begin{eqnarray}
\| f(z_1,u_1)-f(z_2,u_2) \| \leqslant l_1 \|z_1-z_2\| + l_2 \|u_1-u_2\|,
\end{eqnarray}
where $l_1=\max\{L_1,L_2,L_3\}$ and $l_2=1$ are the Lipschitz constants of the nonlinear system (\ref{sys}).
\end{lemma}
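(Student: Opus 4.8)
The plan is to exploit the special structure of the manipulator dynamics, namely that $f(z,u)=T(\theta)u$ is affine in the control and depends on the state $z=(p,\theta)$ only through the joint angles $\theta$, not through the Cartesian coordinates $p=(x,y)$. Given two state–control pairs $(z_1,u_1)$ and $(z_2,u_2)$, first I would introduce the intermediate pair $(z_2,u_1)$ and apply the triangle inequality to split the increment as
\[
f(z_1,u_1)-f(z_2,u_2)=\big[T(\theta^{(1)})-T(\theta^{(2)})\big]\,u_1+T(\theta^{(2)})\,[u_1-u_2],
\]
so that the first bracket isolates the dependence on the state and the second isolates the dependence on the control, and the two parts can be estimated independently.

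For the state term, I would note that the bottom three rows of $T(\theta)$ are constant, so $T(\theta^{(1)})-T(\theta^{(2)})$ has only its first two rows nonzero, and each surviving entry has the form $\pm L_i\big(s(\theta_i^{(1)})-s(\theta_i^{(2)})\big)\omega_i$ with $s\in\{\sin,\cos\}$. Since $\sin$ and $\cos$ are globally $1$-Lipschitz (by the mean value theorem, as $|\cos|\leqslant 1$ and $|\sin|\leqslant 1$), each factor is bounded by $L_i\,|\theta_i^{(1)}-\theta_i^{(2)}|$. Factoring out $\max\{L_1,L_2,L_3\}$ and using $\|\theta^{(1)}-\theta^{(2)}\|\leqslant\|z_1-z_2\|$ together with the admissibility $u_1\in\mathbb{U}$ to control the $\omega_i$ then yields a bound of the form $l_1\|z_1-z_2\|$ with $l_1=\max\{L_1,L_2,L_3\}$.

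For the control term I would observe that the lower identity block of $T(\theta^{(2)})$ reproduces $u_1-u_2$ exactly, contributing $\|u_1-u_2\|$, which is the origin of the constant $l_2=1$; the contribution of the two trigonometric rows then has to be absorbed into the same estimate. I expect the main obstacle to be precisely this last point: arranging the estimate so that it collapses to the clean constants $l_1=\max\{L_1,L_2,L_3\}$ and $l_2=1$ rather than the looser operator-norm bound $\max_i\sqrt{L_i^2+1}$ that a naive column-norm argument would produce. Resolving it requires carefully tracking how the trigonometric rows are apportioned between the state and control terms and invoking the compactness of the admissible sets $\mathbb{Z}$ and $\mathbb{U}$ to keep the estimate local, after which the two bounds combine additively to give the claimed inequality.
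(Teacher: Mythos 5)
Your decomposition through the intermediate pair $(z_2,u_1)$ and your treatment of the state term are sound in outline, with one caveat: carrying out the Cauchy--Schwarz estimate on the two trigonometric rows actually yields a bound of the form $\sqrt{2}\,\max_i L_i\,\|u_1\|\cdot\|z_1-z_2\|$, so the clean constant $l_1=\max\{L_1,L_2,L_3\}$ only emerges if the control bounds $\overline{\omega_i}$ are small enough that $\sqrt{2}\,\|u_1\|\leqslant 1$; the paper's assumptions treat $\overline{\omega_i}$ as arbitrary positive constants (though its simulation values happen to satisfy this). The genuine gap, however, is the control term, and it is not one you can close: the obstacle you flag at the end is fatal, not technical. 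Writing $M(\theta)$ for the top $2\times 3$ trigonometric block of $T(\theta)$, one has $T(\theta)^T T(\theta)=I_{3\times 3}+M(\theta)^T M(\theta)$, hence $\|T(\theta)v\|\geqslant\|v\|$ for every $v$, with strict inequality whenever $M(\theta)v\neq 0$. Now take $z_1=z_2$ with joint angles $\theta$, so that the state term vanishes identically and no ``apportioning'' between the two terms is available, and take $u_1-u_2=(\varepsilon,0,0)$: then
\begin{equation*}
\|f(z_1,u_1)-f(z_1,u_2)\| = \varepsilon\sqrt{L_1^2\sin^2\theta_1+L_1^2\cos^2\theta_1+1} = \sqrt{L_1^2+1}\,\|u_1-u_2\|,
\end{equation*}
which strictly exceeds $\|u_1-u_2\|$ for every admissible $\theta$. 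Consequently no argument --- compactness of $\mathbb{Z}$ and $\mathbb{U}$ included --- can deliver $l_2=1$ in the Euclidean norm; the sharp constant is $\sup_\theta\|T(\theta)\|_{\mathrm{op}}=\sqrt{1+\sup_\theta\|M(\theta)\|_{\mathrm{op}}^2}\leqslant\sqrt{1+L_1^2+L_2^2+L_3^2}$. Your instinct that this was the main difficulty was exactly right; your plan to resolve it by redistributing the trigonometric rows cannot succeed.

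For comparison, the paper offers no proof of this lemma at all: it simply asserts that ``by the system model, the results are obvious.'' Your attempt to actually carry out the estimate is therefore more substantive than the source, and it exposes that the stated constants, in particular $l_2=1$, are not attainable as written. A correct version of the lemma would either enlarge $l_2$ as above or fold the discrepancy into redefined constants --- a repair the rest of the paper implicitly depends on, since $l_1$ and $l_2$ feed directly into the deviation bound of Theorem~1 and the tightened constraint sets built from it.
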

By the system model, the results are obvious and the nonlinearity of the system can be compressed by the constants $l_1$ and $l_2$, which gives the inspiration for the state prediction.

\begin{lemma}
For the nonlinear system function $f(z,u)$, \\
(i) the function $f: \mathbb{R}^n \times \mathbb{R}^m \rightarrow \mathbb{R}^n$ is a twice continuously differentiable function and $f(0,0)=0$;\\
(ii) the system model can be linearized at the each instant $t_k$ and the system matrices has the formulations as follows
\begin{eqnarray}
A_{t_k} = \frac{\partial f}{\partial z}\big|_{(z(t_k),u(t_k))},\quad B_{t_k} = \frac{\partial f}{\partial u}\big|_{(z(t_k),u(t_k))}.
\end{eqnarray}
Then, the linear system is formulated as
\begin{eqnarray}\label{linearsystem}
\dot{z} = A_{t_k}z(t) + B_{t_k}u(t) +e_t.
\end{eqnarray}
(iii) considering the piece control interval $[t_k,t_{k+1}]$, the Hessian matrix $\nabla^2 f(x,u)$ is bounded as
\begin{eqnarray}
\|\nabla^2 f(x(t),u(t))\| = \begin{vmatrix} \frac{\partial^2 f}{\partial z^2} & \frac{\partial^2 f}{\partial z \partial u} \\ \\ \frac{\partial^2 f}{\partial u \partial z} & \frac{\partial^2 f}{\partial u^2} \end{vmatrix}_{(z(t),u(t))} \leqslant \eta_R,
\end{eqnarray}
where $t\in[t_k,t_{k+1}]$.
\end{lemma}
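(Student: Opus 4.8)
The plan is to verify the three claims directly from the explicit structure of the nominal dynamics $f(z,u)=T(\theta)u$ read off from (\ref{realsys}), where $z=(x,y,\theta_1,\theta_2,\theta_3)$ and $u=(\omega_1,\omega_2,\omega_3)$. The crucial observation is that $f$ is bilinear: each of its first two entries is a sum of products of a trigonometric function of a single joint angle with a single control component, while the last three entries are simply $\omega_1,\omega_2,\omega_3$. Thus smoothness and all derivative bounds reduce to the boundedness of $\sin$, $\cos$ and their derivatives together with the length constants $L_i$. For part (i), every entry of $T(\theta)$ is a finite combination of $\sin\theta_i$, $\cos\theta_i$ and the constants $0,1$, each of which is $C^\infty$; since $f$ is the product of such entries with the linear factor $u$, it is $C^\infty$ on $\mathbb{R}^5\times\mathbb{R}^3$ and in particular twice continuously differentiable. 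The normalization $f(0,0)=0$ is immediate because $f$ carries no constant term: $f(z,0)=T(\theta)\cdot 0=0$ for every $z$, so in particular $f(0,0)=0$.

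For part (ii), the differentiability established in (i) lets me apply the first-order Taylor expansion of $f$ about the operating point $(z(t_k),u(t_k))$. I would compute the Jacobians explicitly, using that $f$ does not depend on $(x,y)$ and that the last three rows equal $(\omega_1,\omega_2,\omega_3)$; this makes $A_{t_k}=\partial f/\partial z$ sparse (nonzero only in the angle columns of the first two rows) and $B_{t_k}=\partial f/\partial u=T(\theta(t_k))$. Collecting the remainder of the expansion together with the physical disturbance $e$ into the single term $e_t$ then yields the stated affine model (\ref{linearsystem}); the point needing care is exactly how the constant offset $f(z(t_k),u(t_k))-A_{t_k}z(t_k)-B_{t_k}u(t_k)$ and the second-order remainder are folded into $e_t$, which is precisely where the Hessian bound of part (iii) will be used.

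For part (iii), I would compute the second-derivative blocks, noting that only the first two output components $\dot x,\dot y$ contribute since the remaining three are affine and the first two do not depend on $(x,y)$. Because $f$ is affine in $u$, the block $\partial^2 f/\partial u^2$ vanishes identically; the mixed block $\partial^2 f/\partial u\,\partial z$ has entries of the form $\pm L_i\cos\theta_i$ or $\pm L_i\sin\theta_i$, and $\partial^2 f/\partial z^2$ has entries of the form $\pm L_i\sin\theta_i\,\omega_i$ or $\pm L_i\cos\theta_i\,\omega_i$ (the cross terms in distinct angles being zero). Using $|\sin|,|\cos|\le 1$, the control constraint $\|\omega_i\|\le\overline{\omega_i}$ inherited from $\mathbb{U}$, and the fixed lengths $L_i$, every entry is bounded by a constant depending only on $\max_i L_i$ and $\max_i\overline{\omega_i}$, so the norm of the Hessian is uniformly bounded by some $\eta_R$ on the whole interval $[t_k,t_{k+1}]$.

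I expect the only genuine subtlety to be bookkeeping rather than analysis. Two things warrant care: first, clarifying the meaning of $\|\nabla^2 f(x(t),u(t))\|$ for the vector-valued $f$, since the displayed determinant-style array should be read as the operator norm of the stacked second-derivative blocks rather than a literal determinant; and second, making the absorption of the affine offset and the Taylor remainder into $e_t$ in part (ii) precise and consistent with the disturbance bound $\|e(t)\|\leqslant\eta_1$ invoked elsewhere. All the derivative estimates themselves are routine once the bilinear structure is exploited, so no serious analytic obstacle is anticipated.
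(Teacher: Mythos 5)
Your proposal is correct and follows essentially the same route as the paper's proof: both exploit the bilinear structure $f(z,u)=T(\theta)u$, write down the Jacobians explicitly ($A_{t_k}=\partial f/\partial z$ nonzero only in the angle columns of the first two rows, with entries $-\omega_iL_i\cos\theta_i$ and $-\omega_iL_i\sin\theta_i$, and $B_{t_k}=\partial f/\partial u=T(\theta(t_k))$), and reduce everything to smoothness of $\sin$ and $\cos$. The one genuine difference is in part (iii). The paper only argues that the system models are ``smooth functions without cuspidal points'' on the closed interval $[t_k,t_{k+1}]$, so a bound $\eta_R$ \emph{exists} --- a compactness argument that, taken literally, gives a possibly different bound on each interval. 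You instead compute the second-derivative blocks explicitly ($\partial^2 f/\partial u^2\equiv 0$ by affineness in $u$; mixed-block entries $\pm L_i\cos\theta_i,\pm L_i\sin\theta_i$; $\partial^2 f/\partial z^2$ entries $\pm L_i\omega_i\sin\theta_i,\pm L_i\omega_i\cos\theta_i$, with cross terms in distinct angles vanishing) and bound them uniformly by a constant depending only on $\max_i L_i$ and $\max_i\overline{\omega_i}$, using the input constraint set $\mathbb{U}$. This buys something the later development implicitly requires: the linearization-error bound $\eta_2$ in (\ref{linearbound}) and the total disturbance bound $\eta$ are treated as single constants valid across all control intervals, which needs a uniform $\eta_R$, not just per-interval existence. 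You also correctly flag two points the paper glosses over: the Hessian display is typeset with determinant-style bars but must be read as a norm of the stacked second-derivative blocks, and the folding of the affine offset $\Omega$ and the Taylor remainder into the disturbance term $e_t$ of (\ref{linearsystem}) has to be reconciled with the separate bound $\|e(t)\|\leqslant\eta_1$ --- bookkeeping the paper defers to (\ref{omega})--(\ref{linearbound}).
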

\begin{proof}
By the nominal system $f(z,u)$, we can easily prove (i). For (ii) and (iii), the linear system matrices are defined as
\begin{equation*}
A\!\!=\!\!
 \left[
 \begin{array}{c|ccc}
 \textbf{0}_{2\times2} & -\omega_1{L}_{1}\cos{\theta}_{1} & -\omega_2{L}_{2}\cos{\theta}_{2} & -\omega_3{L}_{3}\cos{\theta}_{3} \\
                & -\omega_1{L}_{1}\sin{\theta}_{1} & -\omega_2{L}_{2}\sin{\theta}_{2} & -\omega_3{L}_{3}\sin{\theta}_{3} \\ \hline
    \textbf{0}_{3\times2} & & \textbf{0}_{3\times3} & \\
\end{array}
 \right]
\end{equation*}
\begin{eqnarray*}
 B=
 \left[
 \begin{array}{ccc}
         -L_1\sin\theta_1 & -L_2\sin\theta_2 & -L_3\sin\theta_3\\
         L_1\cos\theta_1 & L_2\cos\theta_2 & L_3\cos\theta_3\\ \hline
       & \textbf{\emph{I}}_{3\times3} & \\
 \end{array}
 \right],
\end{eqnarray*}
where $\textbf{\emph{I}}_{3\times3}\in\mathbb{R}^{3\times 3}$ is the identity matrix.

Thus, substitute the states $(z(t_k),u(t_k))$ into $(A,B)$ and we can verify (ii) and (iii).  For the Hessian Matrix $H(z,u)$, the system models are smooth functions without cuspidal points in the piecewise control closed interval, which ensure the existence of the bound $\eta_R$.
\end{proof}

By this lemma and the second-order expansion of Taylor Polynomial, we consider the linearization of the nonlinear system model (\ref{sys}) at each triggering interval, i.e., $t\in[t_k,t_{k+1}]$,
\begin{eqnarray}\label{linearation}
f(z(t),u(t)) \!\!\!\!\!\!&=&\!\!\!\!\!\! f(z(t_k),u(t_k)) + A_{t_k}\big[z(t)-z(t_k)\big] \nonumber \\
&+&\!\!\!\!\!\!B_{t_k}\big[u(t)-u(t_k)\big]  +R(z(t_k),u(t_k)) \nonumber \\
&=&\!\!\!\!\!\! A_{t_k}z(t)\!\!+\!\!B_{t_k}u(t) \!\!+\!\! \Omega \!\!+\!R(z(t_k),u(t_k)).
\end{eqnarray}
In this equation, $\Omega$ is a constant vector as the system error at the point $(z(t_k),u(t_k))$,
\begin{eqnarray}\label{omega}
\Omega = f(z(t_k),u(t_k))-(A_{t_k}z(t_k)+B_{t_k}u(t_k)).
\end{eqnarray}
Especially, $R(z(t_k),u(t_k))$ is the Lagrange Remainder of the linearization error as
\begin{eqnarray}\label{lagranre}
R(z(t_k),u(t_k)) = \nabla^2f[(z(t)-z(t_k)),(u(t)-u(t_k))] \nonumber \\
\cdot f[z(t_k)+\theta(z(t)-z(t_k)),u(t_k)+\theta(u(t)-u(t_k))],
\end{eqnarray}
where $\nabla^2f$ is the Hessian matrix of the pair $(z(t_k),u(t_k))$, $\theta\in(0,1)$ is a constant and $t\in[t_k,t_{k+1}]$. Thus, by the mean value theorem, we can obtain the bound of the Hessian matrix $\nabla^2 f(z,u)$ at each piece interval, which means that the linearization error of the nonlinear system is bounded as
\begin{eqnarray}\label{linearbound}
\|\Omega+R(z(t_k),u(t_k))\| \!\!\!& \leqslant &\!\!\! \|\Omega\|+\eta_R\Big[l_1\|z(t)-z(t_k)\| \nonumber \\
&& \quad\ +l_2\|u(t)-u(t_k)\|\Big] \nonumber \\
&\triangleq& \eta_2.
\end{eqnarray}

Recalling the system (\ref{sys}), we can add this linearization error to the additional disturbances as a total disturbances. Thus, the total disturbances $w_t$ of the system model contains two parts with an upper bound $\eta$, that is
\begin{eqnarray}
\|w_t\| &=& \|e(t)\| + \|\Omega + R(z(t),u(t))\|, \nonumber\\
&\leqslant& \eta_1 + \eta_2 \nonumber\\
&\triangleq& \eta,
\end{eqnarray}
which supplies the theoretical reliability for transforming the perturbed nonlinear system as a linear system with bigger bounded disturbances. Based on the current system states and the nominal system, we can predict the region of the m-steps future states $x(t_k+m)$ by the upper bound of the total disturbances and the nominal states $x^*(t_k+m|t_k)$, which is shown in the following theorem.
\begin{theorem}
Beyond the same control input $\mathbf{\bar{u}}^*(t_k)$, the state deviation between the nominal system and the real system from $t_k$ to $t_k+m$ is bounded by
\begin{eqnarray}
\|z_e(t_k+m)\|\leqslant m\eta(1+l)^{m}, \quad m \in [0,T],
\end{eqnarray}
where $l=l_1+1$, $\eta$ is the total disturbances and $z_e(t_k+m) \triangleq z(t_k+m)-z^*(t_k+m|t_k)$ is the deviation between the nominal system and the real disturbed system.
\end{theorem}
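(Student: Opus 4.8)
The plan is to prove the estimate by induction on the step index $m$, viewing $z_e$ as the state of an error system forced by the total disturbance. First I would settle the base case: the nominal prediction is initialized by $\bar{z}(t_k|t_k)=z(t_k)$, so the two trajectories agree at $t_k$ and $z_e(t_k)=0$, for which the claimed bound at $m=0$ reads $0\leqslant 0$.

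Next I would set up a one-step recursion on $\|z_e\|$. Over the interval $[t_k+m,t_k+m+1]$ I would propagate both the real trajectory and the nominal prediction through the linearized model (\ref{linearation}) driven by the same optimal input $\mathbf{\bar{u}}^*(t_k)$. Subtracting the two evolutions, the error increment is governed by the state-coupling term $f(z,u)-f(z^*,u)$, which Lemma 1 bounds by $l_1\|z_e\|$, and by the forcing term collecting the exogenous disturbance $e$ together with the linearization remainder $\Omega+R$ from (\ref{linearbound}), whose norm is at most $\eta$. Absorbing the identity part of the unit-step transition and the Lipschitz growth into a single factor, and using $l=l_1+l_2$, this produces the recursion
\begin{eqnarray*}
\|z_e(t_k+m+1)\| \leqslant (1+l)\,\|z_e(t_k+m)\| + \eta .
\end{eqnarray*}

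Finally I would close the induction. Assuming $\|z_e(t_k+m)\|\leqslant m\eta(1+l)^m$, the recursion gives $\|z_e(t_k+m+1)\|\leqslant m\eta(1+l)^{m+1}+\eta$; since $(1+l)^{m+1}\geqslant 1$ we have $\eta\leqslant\eta(1+l)^{m+1}$, so the right-hand side is bounded by $(m+1)\eta(1+l)^{m+1}$, which is exactly the claim at step $m+1$. The hard part will be the forcing-term bookkeeping in the second step: one must confirm that the exogenous disturbance and the linearization error genuinely combine into a single bound $\eta$ uniformly over each interval, and that the transition of the linearized dynamics over a unit step contributes the factor $(1+l)$ consistently with the Lipschitz constants of Lemma 1. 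Once this per-step inequality is justified, the recursion and the induction are routine, and the resulting estimate is in fact a conservative relaxation of the exact geometric sum $\eta\big((1+l)^m-1\big)/l$.
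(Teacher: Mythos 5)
Your proof is correct, but it resolves the recursion differently from the paper. The paper also works with the discrete-time error dynamics (via $g(z,u)=f(z,u)-z$, whose Lipschitz constant is $l=l_1+1$), but instead of keeping a one-step inequality it sums the recursion from $i=0$ to $m-1$ to obtain the cumulative bound $\|z_e(t_k+m)\|\leqslant m\eta+l\sum_{i=0}^{m-1}\|z_e(t_k+i)\|$ and then invokes the discrete Gronwall--Bellman inequality to produce $m\eta(1+l)^m$. Your route keeps the per-step contraction $\|z_e(t_k+m+1)\|\leqslant(1+l)\|z_e(t_k+m)\|+\eta$ and closes it by direct induction, which is more elementary (no named comparison lemma needed), makes the conservatism explicit --- your observation that the exact solution of the recursion is the geometric sum $\eta\bigl((1+l)^m-1\bigr)/l$, strictly sharper than $m\eta(1+l)^m$, is accurate and is something the paper's Gronwall step obscures --- and makes the base case $z_e(t_k)=0$ visible, which the paper uses implicitly. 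Two bookkeeping caveats: first, you attribute the additive $+1$ in $l$ to $l_2$, writing $l=l_1+l_2$; in the paper's argument the $l_2$-term vanishes identically because both trajectories are driven by the \emph{same} input sequence, and the $+1$ instead comes from the identity part of the transition (i.e.\ from $g=f-z$), so the numerical agreement $l_1+l_2=l_1+1$ is a coincidence of $l_2=1$. Second, your worry about double counting in the forcing term is legitimate --- bounding the state coupling by the nonlinear Lipschitz constant $l_1$ while also charging the linearization remainder $\Omega+R$ to the disturbance budget $\eta$ is redundant --- but the paper does exactly the same thing, and since both are over-estimates the resulting inequality remains valid; neither proof is tight here.
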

\begin{proof}
Let $g(z,u)=f(z,u)-x$ and we easily have
\begin{eqnarray}
\|g(z_1,u)-g(z_2,u)\| \leqslant (l_1+1)\|z_1-z_2\|.
\end{eqnarray}

For the real system states with discrete-time formalization, we have as
\begin{eqnarray}
z(t_k+i+1)-z(t_k+i)\!\!\!&=&\!\!\!g(z(t_k+i),u^*(t_k+i|t_k)) \nonumber \\
&&\quad+w(t_k+i).
\end{eqnarray}
By this recursion formula, summing up from $i=0$ to $i=m-1$ yields
\begin{eqnarray}
z(t_k+m) = z(t_k) \!\!\!&+&\!\!\! \sum_{i=0}^{m-1}g(z(t_k+i),u^*(t_k+i|t_k)) \nonumber \\
&&\quad \  +w(t_k+i).
\end{eqnarray}
Analogously, we have the same formulation for the nominal system as
\begin{equation}
z^*(t_k+m|t_k) \\
= z^*(t_k|t_k) + \sum_{i=0}^{m-1}g(z^*(t_k+i|t_k),u^*(t_k+i|t_k)).
\end{equation}
Let $l=l_1+1$. Thus, the deviation of $z(t_k+m)$ and $z^*(t_k+m|t_k)$ holds that
\begin{eqnarray}
&&\|z(t_k+m)-z^*(t_k+m|t_k)\| \nonumber\\
&&=\|\sum_{i=0}^{m-1}\Big[g(z(t_k+i),u^*(t_k+i|t_k))-g(z^*(t_k+i|t_k), \nonumber\\
&&\quad \quad \quad \quad u^*(t_k+i|t_k))\Big]+\sum_{i=0}^{m-1}w(t_k+i)\| \nonumber\\
&& \leqslant m\eta + (l+1)\sum_{i=0}^{m-1}\|z(t_k+i)-z^*(t_k+i|t_k)\|.
\end{eqnarray}
Applying Gronwall-Bellman inequality, it holds that
\begin{eqnarray}
&&\|z(t_k+m)-z^*(t_k+m|t_k)\| \nonumber\\
&& \quad\quad\quad\quad \leqslant m\eta + \sum_{i=0}^{m-1}lm\eta\prod_{j=i+1}^{m-1}(1+l), \nonumber\\
&& \quad\quad\quad\quad = m\eta(1+l)^m.
\end{eqnarray}
The proof is completed.
\end{proof}
Based on this theorem, we suppose that the upper bound of computational time for solving the OCP is $m$, where $m\in[0,T]$ is a constant by repeated trials. By triangle inequality, the future real system states can be bounded as
\begin{eqnarray}\label{realcon}
\|z(t_k+m)\| \leqslant \|z^*(t_k+m|t_k)\| + m\eta(1+l)^m.
\end{eqnarray}

Referring to the definition of disturbed invariant set, we can replace (\ref{realcon}) as
\begin{eqnarray}\label{predisinvset}
\|z(t_k+m)\| \in \mathbb{Z}_{\omega}(z^*(t_k),m),
\end{eqnarray}
where $\mathbb{Z}_{\omega}(z^*(t_k),m)$ means that from the nominal states $z^*(t_k)$, we can predict the future optimal system states $z^*(t_k+m)$ and the radius $m\eta(1+l)^m$ of the disturbed invariant set so that the real states $z(t_k+m)$ can be limited in the $\mathbb{Z}_{\omega}(x^*(t_k),m)$, named predictive disturbed state set.

\subsection{Robust Tube-based Smooth-MPC}
For the conventional robust MPC, the following lemma gives a foundational theory framework to ensure the recursive feasibility and closed-loop stability.
\begin{lemma}
For the nominal linear system, MPC is of recursive feasibility and closed-loop stability if\\
(i) the OCP has a feasible solution at the initial instant $t_0$;\\
(ii) there is a local stabilizing controller $\kappa_f(z)$ in the robust terminal region to satisfy the constraint $\forall z \in \mathbb{Z}_\epsilon, \kappa_f(z) \in \mathbb{U}$ such that
\begin{eqnarray}
\dot{V_f}(\bar{z}(t))\leqslant -L(\bar{z}(t),\kappa_f(\bar{z}(t))).
\end{eqnarray}
Moreover, if the controller is chosen as $\kappa_f(z) = Kz$, we have the Lyapunov equation of the weight matrices $Q$, $R$ and $P$, that is
\begin{eqnarray}\label{riccadi}
(A+BK)^TP+P(A+BK)\leqslant -Q^*,
\end{eqnarray}
where $Q^*=Q+K^TRK$.
\end{lemma}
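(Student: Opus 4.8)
The plan is to treat the optimal value function $J^*(z(t_k)) \triangleq J(\bar{z}^*(\cdot|t_k),\mathbf{\bar{u}}^*(\cdot|t_k),t_k)$ as a Lyapunov function for the closed loop and to establish the two claimed properties in sequence, deferring the specialized matrix inequality to the end. First I would prove that feasibility propagates from one sampling instant to the next, so that assumption (i) at $t_0$ yields feasibility at every $t_k$; then I would use the feasible candidate built in that step to show $J^*$ strictly decreases; finally I would verify that the Lyapunov inequality (\ref{riccadi}) is nothing but condition (ii) written out for the quadratic terminal cost and linear terminal law.

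\textbf{Recursive feasibility.} The standard device is an explicit feasible candidate at $t_{k+1}$ obtained by shifting the previous optimizer and grafting the terminal controller onto its tail. Writing $\mathbf{\bar{u}}^*(\cdot|t_k)$ and $\bar{z}^*(\cdot|t_k)$ for the optimal input and nominal trajectory on $[t_k,t_k+T]$, which by hypothesis satisfy $\bar{z}^*\in\mathbb{Z}\ominus\mathbb{Z}_e$, $\mathbf{\bar{u}}^*\in\mathbb{U}$, and $\bar{z}^*(t_k+T|t_k)\in\mathbb{Z}_\epsilon$, I would set
\begin{equation}
\tilde{u}(t|t_{k+1}) = \begin{cases} \bar{u}^*(t|t_k), & t\in[t_{k+1},t_k+T],\\ \kappa_f(\tilde{z}(t|t_{k+1})), & t\in(t_k+T,t_{k+1}+T]. \end{cases}
\end{equation}
On the first subinterval the candidate inherits constraint satisfaction directly from the optimizer. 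On the second subinterval I would invoke condition (ii): since $\dot{V}_f(\tilde{z})\leqslant -L(\tilde{z},\kappa_f(\tilde{z}))\leqslant 0$ along the terminal flow, the region $\mathbb{Z}_\epsilon$ is positively invariant, so $\tilde{z}$ remains in $\mathbb{Z}_\epsilon$ (hence meets the tightened state constraint) and terminates inside $\mathbb{Z}_\epsilon$, while $\kappa_f(z)\in\mathbb{U}$ on $\mathbb{Z}_\epsilon$ is exactly the input-admissibility hypothesis. Thus the candidate is feasible and induction completes the argument.

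\textbf{Stability and the matrix inequality.} Optimality at $t_{k+1}$ gives $J^*(z(t_{k+1}))\leqslant J(\tilde{z},\mathbf{\tilde{u}},t_{k+1})$. Splitting the candidate cost at $t_k+T$, the overlap on $[t_{k+1},t_k+T]$ cancels against the corresponding piece of $J^*(z(t_k))$, leaving the removed stage cost on $[t_k,t_{k+1}]$, the newly added stage cost on the terminal extension, and the change in terminal penalty. Integrating (ii) over $(t_k+T,t_{k+1}+T]$ yields $V_f(\tilde{z}(t_{k+1}+T))-V_f(\bar{z}^*(t_k+T|t_k))\leqslant -\int_{t_k+T}^{t_{k+1}+T} L(\tilde{z},\kappa_f(\tilde{z}))\,dt$, which absorbs the added stage cost and produces
\begin{equation}
J^*(z(t_{k+1})) - J^*(z(t_k)) \leqslant -\int_{t_k}^{t_{k+1}} L(\bar{z}^*,\bar{u}^*)\,dt \leqslant 0,
\end{equation}
so $J^*$ is non-increasing and, being lower-bounded by the positive-definite stage cost, certifies asymptotic stability. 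For the quadratic choice $V_f(z)=\|z\|_P^2$ and $\kappa_f(z)=Kz$ the nominal closed loop is $\dot{z}=(A+BK)z$, whence $\dot{V}_f=z^T\big[(A+BK)^TP+P(A+BK)\big]z$ and $L(z,Kz)=z^T(Q+K^TRK)z=\|z\|_{Q^*}^2$; condition (ii) then reads $z^T\big[(A+BK)^TP+P(A+BK)\big]z\leqslant -z^TQ^*z$ for all $z$, which is precisely (\ref{riccadi}).

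The hard part will be the feasibility step, and specifically certifying that the grafted tail stays inside the \emph{tightened} set $\mathbb{Z}\ominus\mathbb{Z}_e(t)$ rather than merely inside $\mathbb{Z}$: because $\mathbb{Z}_e(t)$ grows with $t$, the terminal extension must still leave positive margin, which forces a coupling between the horizon $T$, the disturbance bound $\eta$, and the terminal-region radius $\epsilon$. Relatedly, one must ensure $\mathbb{Z}_\epsilon$ is chosen small enough that $Kz\in\mathbb{U}$ holds throughout the invariant set, tying $\epsilon$ to the input bounds $\overline{\omega_i}$; verifying these sizing conditions simultaneously is where the genuine work lies, whereas the cost-decrease telescoping is routine once a feasible candidate is in hand.
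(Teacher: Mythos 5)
Your proof cannot be matched step-by-step against the paper's, because the paper offers none: this lemma is stated as the ``foundational theory framework'' of conventional MPC and its justification is implicitly delegated to the standard literature on stabilizing terminal conditions (the Chen--Allg\"ower / Mayne et al.\ quasi-infinite-horizon argument). What you have written out is exactly that classical argument, and it is correct: recursive feasibility via the shifted optimizer with the terminal controller $\kappa_f$ grafted onto the tail, using the fact that condition (ii) makes the sublevel set $\mathbb{Z}_\epsilon$ positively invariant and input-admissible; a Lyapunov decrease for the optimal value $J^*$ obtained by telescoping the candidate cost and integrating condition (ii) over the extension interval; and the verification that, for $V_f(z)=\|z\|_P^2$ and $\kappa_f(z)=Kz$, the differential inequality in (ii) applied to the closed loop $\dot{z}=(A+BK)z$ is word-for-word the matrix inequality $(A+BK)^TP+P(A+BK)\leqslant -Q^*$ with $Q^*=Q+K^TRK$. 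Two caveats, the first of which you flag yourself: (a) the grafted tail must satisfy the time-varying tightened constraint $\mathbb{Z}\ominus\mathbb{Z}_e(t)$, which requires the sizing condition $\mathbb{Z}_\epsilon\subseteq\mathbb{Z}\ominus\mathbb{Z}_e(t)$ at the relevant prediction times; the lemma as stated omits this hypothesis, so your proof correctly exposes an implicit assumption the paper never makes explicit. (b) Your last inference, asymptotic stability from the decrease being bounded by a ``positive-definite'' stage cost, is not quite available as stated, since the paper takes $Q$ only positive semi-definite; one needs $Q\succ0$ (or a detectability-type condition) to pass from convergence of the accumulated cost to convergence of the state itself. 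Both points are refinements of, not errors in, your argument; your write-up does the work that the paper's citation-level treatment leaves to the reader.
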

\begin{algorithm}[htpb]\label{def alg1}
\caption{the Robust Tube-Based Smooth-MPC}
\begin{algorithmic}[1]
\State \emph{Offline}: Initialize the parameters $m$, $l$ of system (\ref{sys}) and set the weight matrices $Q$ and $R$. By (\ref{riccadi}), compute the terminal state feed-back gain $K$ and the weight matrix $P$. Then, defining the terminal set to satisfy (\ref{terminal}). Find the optimal solution $v^*(t_0)$ for the initial state $z(t_0)$.
\State \emph{Online}:\\ for each triggering time $t_k, k=1,2,3,\ldots$
\State \quad \quad (i) Apply the first $m$ elements of the optimal control sequence $v^*(t_{k-1})$ to (\ref{tubempc}) for the real system.
\State \quad \quad (ii) Measure the current state $z(t_k)$ and compute the predictive state $z^*(m|t_k)$ by the nominal nonlinear model (\ref{nonminalsys}) and the last optimal control sequence $v^*(t_{k-1})$. Then, estimate the  predictive disturbed state set $\mathbb{Z}_{\omega}(z^*(t_k),m)$.
\State \quad \quad (iii) Based on the state $z^*(m|t_k)$, linearize the model (\ref{nonminalsys}) to obtain system matrices $A_{t_k}$ and $B_{t_k}$ and compute the local state feed-back gain $K_{t_k}$ as (\ref{riccdilocal}).
\State \quad \quad (iv) Solve the OCP 2 to obtain the optimal control sequence $v^*(t_k)$ for the next triggering instant $t_{k+1}$.
\State \quad \quad ((ii) to (iv) are synchronous with (i))
\State \quad \quad (v) Let $k = k+1$.
\end{algorithmic}
\end{algorithm}

Because the OCP 1 has to be solved at each sampling instant and the first element of the control sequence is used to control the real systems, which causes the computational delay and the suboptimality of the control input. Thus, we redefine the control period to adapt the delay as
\begin{eqnarray}
t_{k+1}-t_k = \Delta t =m\delta, \quad m\in\mathbb{N}_{\geqslant 1},\quad m \leqslant T/\Delta t,
\end{eqnarray}
Then, for each control interval $t\in[t_k,t_{k+1}]$, the optimal control $\mathbf{u}^*(t_k)$ can be used from $u^*(t_k|t_k)$ to $u^*(t_k+m|t_k)$ other than only the first element.

By Lemma 2, we can linearize the robot manipulator system with the linearization error. Thus, the system can be modeled in the interval $t\in[t_k,t_{k+1}]$ as
\begin{eqnarray}
\dot{z}(t) = A_{t_k}z(t) + B_{t_k}u(t) + w_t.
\end{eqnarray}
With the formulation of tube MPC, the controller $u(t_k)$ is designed as
\begin{eqnarray}\label{tubempc}
u(t|t_k) = v(t|t_k) + K[z(t)-z^*(t|t_k)],
\end{eqnarray}
where $v(t_k)$ is the control decision variable for the linear system and $K$ is the state feed-back gain computed by the Riccati Equation as (\ref{riccadi}) to restrain the bounded disturbances.

Based on the linearization of the nonlinear systems and the predictive disturbed state set, we redefine the optimal control problem 2 (OCP 2), which is formulated
\begin{eqnarray}
\mathbf{v}^*(t|t_k) = {\min_{\mathbf{\bar{u}}(t|t_k)\in \mathbb{U}}} J(\bar{z}(t|t_k),\mathbf{\bar{v}}(t|t_k),t_k),
\end{eqnarray}
subject to
\begin{subequations}
\begin{align}
&z(t_{k+1}|t_k) \in \mathbb{Z}_{\omega}(z^*(t_k),m),     \\
&\dot{\bar{z}}(t|t_k) = A_{t_k}\bar{z}(t|t_k)+B_{t_k}\bar{u}(t|t_k),\\
&\bar{z}(t|t_k) \in \mathbb{Z}\ominus \mathbb{Z}_e(t),\ \ \bar{u}(t|t_k)\in \mathbb{U}\ominus K\bar{z}, \\
&\bar{z}(t_k+T|t_k) \in \mathbb{Z}_\epsilon,\quad t\in[t_k,t_k+T]. \label{terminal}
\end{align}
\end{subequations}
where $J(\bar{z}(t|t_k),\mathbf{\bar{v}}(t|t_k),t_k)$, $\mathbb{Z}_e(i)$ and $\mathbb{Z}_\epsilon$ have the same definition with the OCP 1. By solving the OCP 2, we can obtain the optimal control sequence $\mathbf{v}^*(t_k)$. Then, the first $m$ elements of the sequence and the state feed-back gain as (\ref{tubempc}) are applied into the system. Repeating the process, the system states can converge to the neighbor of the equilibrium point, which is summarized in Algorithm 1..

Due to the change of the initial condition of the OCP, the multi-step usage of the optimal control sequence and the unknown disturbances, the feasibility and the stability of the designed control strategy may be lost for the repeated control process. The following theorem are developed to ensure the implementation of the smooth-MPC.
\begin{theorem}
For the discrete-time nonlinear systems (\ref{sys}) with Assumption 1 and Assumption 2, the robust tube-based smooth-MPC is of feasibility and stability if: \\
(i) the local state feed-back gain $K_{t_k}$ satisfies
\begin{eqnarray}
A_{t_k}+B_{t_k}K_{t_k} \preceq 0.
\end{eqnarray}
(ii) the upper bound $\eta$ of the total disturbances and the computational time interval $m$ hold
\begin{eqnarray}
m\eta(1+l)^m \leqslant \sum_{i=0}^{\infty}\|(A+BK)^i\|\eta.
\end{eqnarray}
\end{theorem}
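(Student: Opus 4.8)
The plan is to split the argument into \textbf{recursive feasibility} and \textbf{closed-loop stability}, following the standard tube-MPC template but letting the two novel ingredients — the predictive disturbed state set of Theorem 1 and the per-interval piecewise linearization folded into the total disturbance $\eta$ — carry the weight. Throughout I treat the linearized-plus-disturbance model $\dot{z}=A_{t_k}z+B_{t_k}u+w_t$ as the nominal object for tube MPC, using the Lipschitz property of Lemma 1 and the Hessian bound of Lemma 2 as standing hypotheses.

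For recursive feasibility I would assume OCP 2 admits an optimal solution $\mathbf{v}^*(t_k)$ at $t_k$ with nominal trajectory $\bar{z}^*(\cdot|t_k)$, and construct a feasible candidate at $t_{k+1}$. The natural candidate is the shifted control: retain $\bar{v}^*(t|t_k)$ on $[t_{k+1},t_k+T]$ and append the terminal feedback $\kappa_f(z)=K_{t_k}z$ on $[t_k+T,t_{k+1}+T]$. The initial-condition constraint $z(t_{k+1}|t_k)\in\mathbb{Z}_{\omega}(z^*(t_k),m)$ is exactly what Theorem 1 delivers: the real state at $t_{k+1}$ deviates from the nominal prediction by at most $m\eta(1+l)^m$, so it lands inside the predictive disturbed set that serves as the next initial condition. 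The shifted portion inherits the tightened state constraint $\mathbb{Z}\ominus\mathbb{Z}_e(t)$ and the input constraint $\mathbb{U}\ominus K\bar{z}$ from feasibility at $t_k$, while the appended terminal segment stays admissible because condition (i), $A_{t_k}+B_{t_k}K_{t_k}\preceq 0$, renders $\mathbb{Z}_\epsilon$ positively invariant under $\kappa_f$ and keeps $\kappa_f(z)\in\mathbb{U}$ there — this is the Lemma 3 hypothesis re-verified on each interval with the time-varying pair $(A_{t_k},B_{t_k})$.

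The decisive step is reconciling the set-valued initial condition with the tightened tube, and this is where condition (ii) enters. I read $\sum_{i=0}^{\infty}\|(A+BK)^i\|\eta$ as the radius of the robust disturbance-invariant set $\mathcal{S}=\bigoplus_{i=0}^{\infty}(A+BK)^i\mathcal{W}$ generated by the error dynamics $z_e^+=(A+BK)z_e+w$ with $\|w\|\leqslant\eta$. Condition (ii) then asserts that the prediction radius $m\eta(1+l)^m$ does not exceed this invariant radius, so whenever the real state occupies $\mathbb{Z}_{\omega}(z^*(t_k),m)$ the tube centered on the nominal trajectory still fits inside the original constraints; consequently $\mathbb{Z}\ominus\mathbb{Z}_e(t)$ remains non-empty and the candidate respects every constraint even though $m$ control elements are applied at once. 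For stability I would take the optimal cost $J^*(t_k)=J(\bar{z}^*(\cdot|t_k),\mathbf{v}^*(t_k),t_k)$ as a Lyapunov function and bound $J^*(t_{k+1})$ by the cost of the shifted candidate; subtracting the stage cost on $[t_k,t_{k+1}]$ and invoking the terminal inequality $\dot{V}_f(\bar{z})\leqslant-L(\bar{z},\kappa_f(\bar{z}))$ of Lemma 3 to absorb the appended terminal piece, the candidate cost telescopes to $J^*(t_{k+1})-J^*(t_k)\leqslant-\int_{t_k}^{t_{k+1}}L(\bar{z}^*,\bar{v}^*)\,dt\leqslant 0$, so $J^*$ is non-increasing and, modulo the residual governed by $\eta$, the nominal states converge to a neighborhood of the origin while the real states stay in the tube, i.e. the closed loop is input-to-state stable.

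The hard part is not the telescoping cost decrease but the feasibility bookkeeping at the switching instant: because $m$ elements are committed simultaneously and the initial condition is a set rather than a point, the usual one-step shift argument must be re-examined so that the appended terminal feedback and the time-varying $(A_{t_k},B_{t_k})$ keep all tightened constraints satisfied at once. Condition (ii) is what closes this gap, and the crux of the proof is verifying that it genuinely guarantees containment of the prediction set within the invariant tube uniformly for all $m\in[0,T]$ — a point where I would also want to pin down the continuous-versus-discrete interpretation of $\sum_{i=0}^{\infty}\|(A+BK)^i\|$ so that the series is well defined under condition (i).
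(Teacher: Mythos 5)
Your proposal takes essentially the same route as the paper's own proof: both interpret condition (ii) as containment of the predictive disturbed state set $\mathbb{Z}_{\omega}(z^*(t_k),m)$ inside the disturbance-invariant set of radius $\gamma=\sum_{i=0}^{\infty}\|(A+BK)^i\|\eta$, establish recursive feasibility via the chain $z(t_k+m)\in z^*(t_k+m|t_k)\oplus\mathbb{Z}_{\omega}(m)\subseteq z^*(t_k+m|t_k)\oplus\mathbb{Z}_{\gamma}\subseteq\mathbb{Z}\ominus\mathbb{Z}_{e}(m)$, and use condition (i) to keep the real states inside the tube around the nominal trajectory so that stability follows from the conventional tube-MPC analysis. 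The only difference is one of completeness: you spell out the shifted-candidate construction and the Lyapunov cost-telescoping that the paper merely delegates to the cited tube-MPC framework, and your closing concern about the discrete-versus-continuous reading of $\sum_{i=0}^{\infty}\|(A+BK)^i\|$ under the eigenvalue condition (i) is a genuine imprecision that the paper's proof leaves unaddressed as well.
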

\begin{proof}
For the control strategy, we mainly design three parts: (i) the multi-step usage of the optimal control sequence; (ii) the change of the initial condition for the OCP; (iii) the linearization of the nonlinear system.

Based on the theory framework of tube MPC \cite{LANGSON2004125} for linear systems with constraints and disturbance, the feasibility and the stability of the time-triggered fashion can be ensured. Thus, we can easily prove that (ii) and (iii) are still suitable for smooth-MPC. Here, we introduce the disturbance invariant set $\mathbb{Z}_{\gamma} :=\{z|\|z\|\leqslant \gamma\}$ for each piece linearation of the system, where $\gamma$ is defined as
\begin{eqnarray}
\gamma = \sum_{i=0}^{\infty}\|(A+BK)^i\|\eta,
\end{eqnarray}
Thus, if $\mathbb{Z}_{\omega}(z^*(t_k),m)\in\mathbb{Z}_{\gamma}$ is satisfied, we can ensure the implementation of the framework of tube MPC. For the multi-step usage, if
\begin{eqnarray}
m\eta(1+l)^m \leqslant \sum_{i=0}^{\infty}\|(A+BK)^i\|\eta,
\end{eqnarray}
the state constraint holds that
\begin{eqnarray}
\|z(t_k+m)\|\!\!\!\!&\in&\!\!\!\! \|z^*(t_k+m|t_k)\| \oplus \mathbb{Z}_{\omega}(m) \nonumber \\
&\in&\!\!\!\! \|z^*(t_k+m|t_k)\| \oplus \mathbb{Z}_{\gamma} \nonumber \\
&\in&\!\!\!\! \mathbb{Z} \ominus \mathbb{Z}_{e}(m) .
\end{eqnarray}
Thus, the recursive feasibility can be ensured.

On the other hand, if the state feed-back gain $K_{t_k}$ at each sampling instant satisfies
\begin{eqnarray}\label{riccdilocal}
A_{t_k}+B_{t_k}K_{t_k} \preceq 0,
\end{eqnarray}
the real system states are contained in the disturbance invariant set of the nominal states. Then, by the analysis of the conventional tube MPC, the practical stability of our approach is ensured and the proof is completed.
\end{proof}

\section{Simulation Results}
In this section, we evaluate the efficiency of our proposed control method compared with the other two MPC controller. The first one is the optimal MPC with no computation time at each triggering instant, which represents the optimal control performance. The second control strategy is the time-triggered MPC with the worst delay for each control period, which is widely used in real systems. Consider the three links robot manipulator in a $x-y$ plane as Fig. 1 and the system parameters and the initial condition are set as
\begin{equation}
\left\{
\begin{aligned}
&L_1 = L_2 = \sqrt{5}, \quad L_3 = \sqrt{10} \\
&\theta_1 = \frac{\pi}{2}+\arcsin\frac{2}{\sqrt{5}}, \theta_2 = \frac{\pi}{2}+\arcsin\frac{1}{\sqrt{5}},\\
&\theta_3 = \arcsin\frac{1}{\sqrt{10}},\\
&p = (0,4),
\end{aligned}
\right.
\end{equation}

For this system, the joint pose is defined as the angle from $x$ positive axis to finger phalanx in anti-clockwise direction. The angular velocity of finger phalanx is positive in anti-clockwise rotation direction, on the contrary negative in clockwise rotation direction. The states and inputs constraint are given as
\begin{equation}
\left\{
\begin{aligned}
&\frac{\pi}{2}\leqslant\theta_1\leqslant\pi, 0\leqslant\theta_2\leqslant\pi, 0\leqslant\theta_3\leqslant\frac{\pi}{2},  \\
&-\frac{\pi}{16}\leqslant\omega_1,\omega_2,\omega_3\leqslant\frac{\pi}{16}.
\end{aligned}
\right.
\end{equation}
At last, the upper bound of the total disturbances is set as $\eta = 0.02$ to satisfy Lemma 1 and Theorem 2.
\begin{figure}[htbp]
	\centering
	\includegraphics[width=1.0\linewidth]{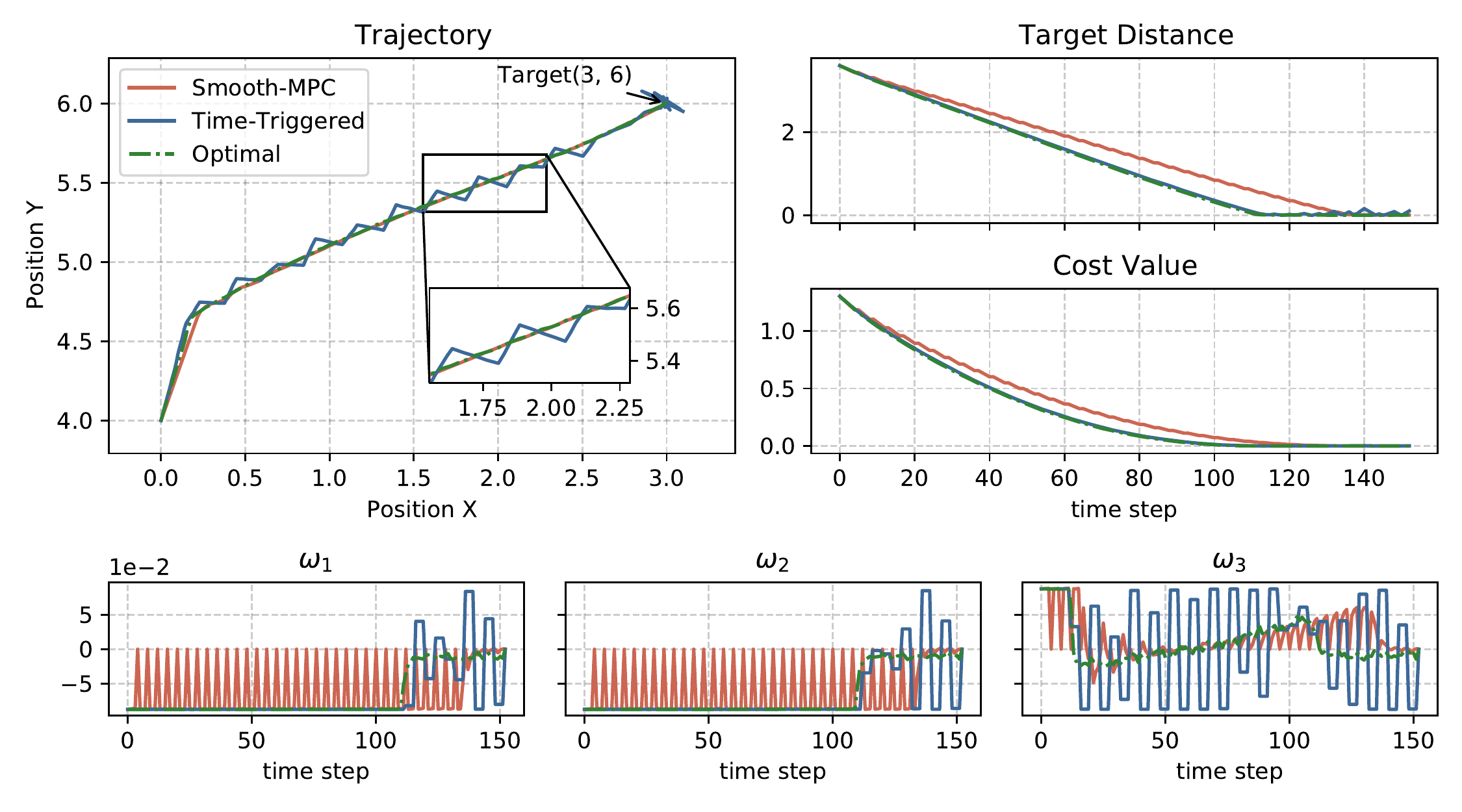}
	\caption{The comparison of three control method over the Position Tracking task, which contain the state trajectory, position error, cost function value and control input.}
\end{figure}

For the MPC controllers, the sampling time is $\delta = 0.1s$ and the prediction horizon $T=3s$. The weight matrices are defined as $Q = 0.1\textbf{\emph{I}}_{5\times5}$ and $R = 0.01\textbf{\emph{I}}_{3\times3}$, where $\textbf{\emph{I}}$ is the identity matrix. By repeated trials, the computation time is set as $m=4$, which means $t_{k+1}-t_k=0.4s$. The effectiveness of Algorithm 1 and its beneficial features are demonstrated from two tasks.
\subsection{Position Tracking}
We set the desired position of the end point at $(2,6)$. In Fig.2, the three lines represent the end point trajectory for approaching the target point. By comparison, we can find the similarity of the optimal MPC and ours, which are smoother than the time-triggered one to verify the optimal control performance of ours.
\subsection{Trajectory Tracking}
In this task, the end point tracks a specific trajectory by a quarter circle and an oblique line. From Fig. 3, our approach shows the better tracking performance and robustness than the time-triggered one.
\begin{figure}[htbp]
	\centering
	\includegraphics[width=1.0\linewidth]{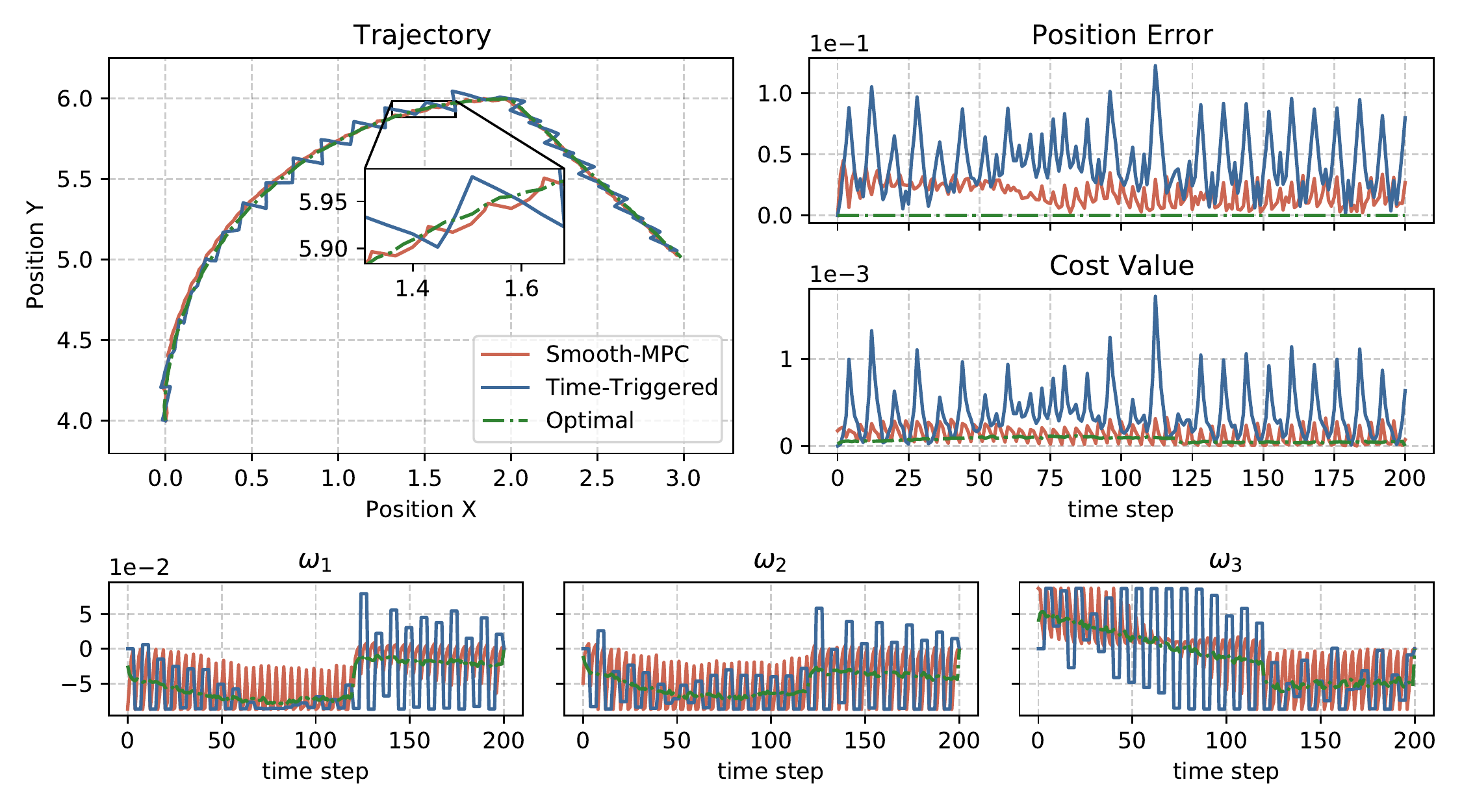}
	\caption{The comparison of three control method over the Trajectory Tracking task, which contain the state trajectory, position error, cost function value and control input.}
\end{figure}

The simulation results show that the designed MPC controller has similar excellent control performance as the optimal MPC in this two tasks using the robot manipulator system, which has less oscillation and better accuracy than time-triggered one. Besides, due to the linearization of the nonlinear system, the average computation time decreases obviously to save computational resource and improve the response speed. Thus, the effectiveness of robust tube-based smooth-MPC is verified.
\section{Conclusion}
In this paper, we aim at eliminating the delay caused by solving the OCP in the robot manipulator systems and propose a novel robust tube-based smooth-MPC to ensure the optimal control performance. We estimated the linearization error as a bounded disturbance to linearize the nonlinear system and reduce the computational complexity of the OCP. Then, the deviation of the nominal system and the real system states is deduced by Lipschitz continuity and triangle inequations to predict the region of the next real states. Based on this two mechanism, the difficulties of the delay for using MPC in fast dynamic systems are dramatically disposed and the optimality of this controller are guaranteed. The experimental results verifies the control performance and the response speed by the proposed smooth-MPC. Our future work will concentrate on the dynamic system of robot manipulator with stochastic disturbances to extend the application of this control method.


\bibliographystyle{IEEEtran}
\bibliography{reference}

\end{document}